%% file: main.tex
\documentclass[conference]{IEEEtran}
\IEEEoverridecommandlockouts
\usepackage{cite}
\usepackage{amsmath,amssymb,amsfonts, amsthm}
\usepackage{multirow}

\DeclareMathOperator*{\argmax}{argmax}
\DeclareMathOperator*{\argmin}{argmin}

\input{macros}

\newtheorem{theorem}{Theorem}

\newtheorem{definition}{Definition}
\newtheorem{example}{Example}
\newtheorem{proposition}{Proposition}
\newtheorem{assumption}{Assumption}

\usepackage{algorithm}
\usepackage{algpseudocode}
\usepackage{subcaption}

\usepackage{graphicx}
\usepackage{textcomp}
\usepackage{xcolor}
\def\BibTeX{{\rm B\kern-.05em{\sc i\kern-.025em b}\kern-.08em
    T\kern-.1667em\lower.7ex\hbox{E}\kern-.125emX}}
\begin{document}

\algrenewcommand\algorithmicrequire{\textbf{Input:}}
\algrenewcommand\algorithmicensure{\textbf{Output:}}

\title{Correcting Learning-based Perception for Safety\\
}
\newcommand{\sayan}[1]{\textcolor{blue}{#1}}
\newcommand{\yangge}[1]{\textcolor{purple}{#1}}

\author{\IEEEauthorblockN{Yan Miao}
\IEEEauthorblockA{\textit{UIUC} \\
}
\and
\IEEEauthorblockN{Hussein Darir}
\IEEEauthorblockA{\textit{UIUC} \\
}
\and
\IEEEauthorblockN{Sayan Mitra}
\IEEEauthorblockA{\textit{UIUC} \\
}
}

\maketitle

\begin{abstract}
Learning-enabled  perception is important in many autonomous systems. Unlike traditional sensors, the boundary where ML perception does or does not work is poorly characterized. Incorrect perception can lead to unsafe or overtly conservative downstream control actions. In this paper, we propose a two-step strategy for correcting ML-based state estimation. First, an offline computation is used to characterize the uncertainties resulting from the  ML module's state estimation,  using preimages of perception contracts. Second, at runtime, a risk heuristic is used to choose particular states from the uncertain estimates to drive the control decisions. We perform extensive simulation-based  evaluation of this runtime perception correction strategy on different vision-based adaptive cruise controllers (ACC modules), in different weather conditions, and road scenarios.  Out of 45 ACC scenarios where the original perception-based control system using Yolo and LaneNet led to safety violations,  in 73\% of the scenarios, our runtime perception correction preserved safety; our method wouldn't be able to recover 27\% of the scenarios where the construction of the preimages of perception contracts is not fully conformant. Further, our runtime perception correction strategy is not overly conservative---on the average only a 2.8\% increase in completion time is experienced in the corrected scenarios, with mild interventions. 
\end{abstract}

\begin{IEEEkeywords}
Autonoumous Vehicle, Safety, Learning-based Perception, Cyber-Physical System
\end{IEEEkeywords}

\input{section/1.introduction}
\input{section/2.related_work}
\input{section/4.system_setup}
\input{section/5.methodology}
\input{section/6.experiments}
\input{section/7.discussions}

\bibliographystyle{IEEEtran}
\bibliography{IEEEabrv,references}
\end{document}

%% file: macros.tex
\newcommand{\U}{\mathcal{U}}

\newcommand{\corrSys}{S_{\mathit{MJ}}}
\newcommand{\D}{\mathcal{D}}
\newcommand{\E}{\mathcal{E}}

\newcommand{\N}{\mathcal{N}}

\newcommand{\pc}{M}
\newcommand{\R}{\mathbb{R}}

\newcommand{\Unsafe}{\mathit{Unsafe}}
\newcommand{\Safe}{\mathit{Safe}}

\newcommand{\X}{\mathcal{X}}
\newcommand{\Y}{\mathcal{Y}}

%% file: section/1.introduction.tex
\section{Introduction}
\label{sec:intro}

Machine learning (ML) can play an important role in the creation of  cyber-physical and autonomous systems that operate in complex environments. Inexpensive sensors coupled with powerful pre-trained ML models can serve as an attractive alternative to traditional sensing and state estimation methods. At the same time, it is also well-known that ML models suffer from fragile decision boundaries and adversarial examples~\cite{wiyatno2019adversarial}. Indeed, a major AI safety concern is the potentially out-sized impact of this lack of robustness in safety critical applications. On the other hand, for traditional  control systems and cyber-physical systems (CPS), there is a rich body of techniques for  model-based design and analysis of systems that are robust to certain types of disturbances~\cite{ML:CDC04-full,verifai-cav19,Anta:2010,ChanM17ARCH}.
These methods provide rigorous guarantees about safety, robustness, and stability, but only in relatively  structured environments and with simple sensor models. In this paper, we explore the middle-ground and aim to provide  semi-formal safety guarantees for AI-enabled CPS.

\begin{figure}
    \centering
    \includegraphics[width=\linewidth]{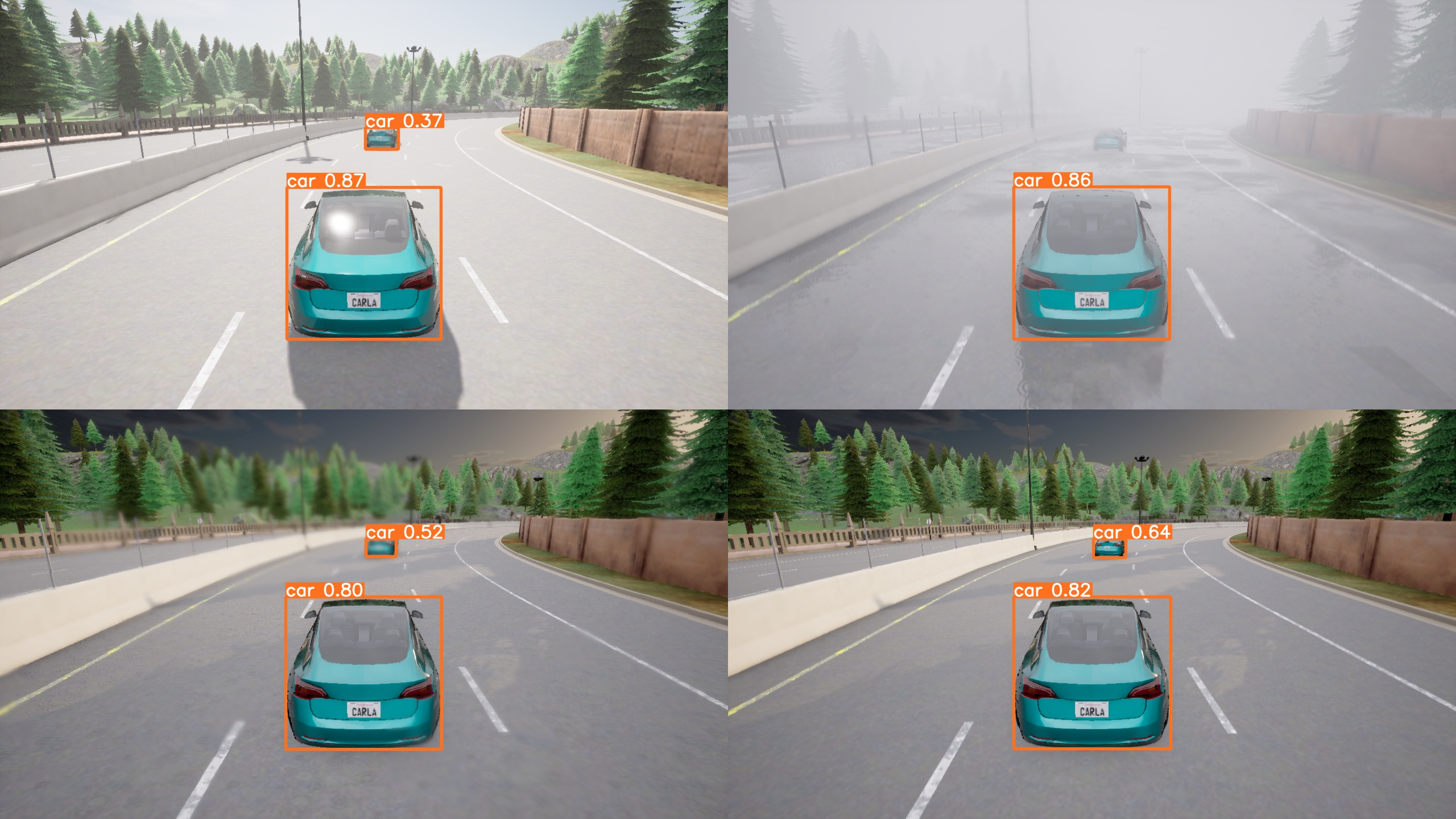}
    \caption{\small Screenshot of simulation of Autonomous Cruise  Control (ACC) scenarios under different weather conditions. Under rain and fog, ML-enabled perception designed to estimate crosstrack error and distance to leading vehicles, can have larger errors. For example, it cannot detect a  vehicle in the the top-right scenario, and the confidence score (labeled outside the orange box) for detecting nearby vehicle is lower. Our approach corrects for such perception errors for a family of controllers.}
    \label{fig:combined_img}
\end{figure}


Consider a vision-based Autonomous Cruising Control (ACC) system in which a  vehicle (controller) relies on perception for lane keeping and maintaining safe distance from a leading vehicle. The perception module $h$ (specifically Yolo~\cite{reis2023realtime} and LaneNet~\cite{10.1109/IVS.2018.8500547} in this example) provides estimates of distance to the leading vehicle (provided there is one) and crosstrack error with respect to the lane center. The vehicle controller $g$ uses these observations or estimates to compute the steering, throttle, and brake inputs for the vehicle (see Figure~\ref{fig:original-system}).
Setting aside the ML-based state estimator $h$ for a moment, we observe that the rest of this system is a classical cyber-physical system (CPS). If only we could assume that the state estimator were perfect $h^*$ or that it came with  reasonable error bounds,  then a whole arsenal of tools would become available for design and analysis: We could get stability envelops using Lyapunov analysis, we could compute invariants  using reachability, and so on. However, like other machine learning models, $h$ as implemented in Yolo and LaneNet do not have error specifications and is fragile. Further, its output estimates depend on environmental factors like lighting and weather in complex ways, which can violate safety of ACC (see Figure~\ref{fig:combined_img}). This near slip from grasp motivates us to investigate the following problem: Given a controller $g$ that preserves some invariant $R$ with perfect perception $h^*$, and given a real perception module $h$ (implemented using ML and thus afflicted by fragility, environment, lack of specs), can we modify or correct the output of $h$  so that the resulting system preserves the safety invariant $R$. We call this the  {\em perception correction problem} and the formal statement appears in Definition~\ref{def:problem}.

We propose a solution to this problem using  {\em preimages of perception contracts\/}.
A perception contract $M$~\cite{EMSOFT,OOPSLA} bounds the 
output of an ML-based state estimator {\em as a function of the ground truth state\/}, so that  it preserves a closed-loop invariant such as $R$ (see Definition~\ref{def:control-invariant-set}). Different representations for  perception contracts have been proposed using  piece-wise affine set-valued functions~\cite{EMSOFT} and decision trees~\cite{OOPSLA}. 
Perception contracts have been used to verify a vision-based lane-keeping system~\cite{EMSOFT}, automated landing for a drone~\cite{sun2023learningbased}, and distributed formation flight~\cite{hsieh2023assuring}. 
A closely related notion of weakest preconditions has been used to analyze vision-based taxiing in the probabilistic setting~\cite{10.1007/978-3-031-37706-8_15}. 
While these ideas have been fruitful for offline verification, at runtime, the ground truth state is not available, and therefore, perception contracts cannot be  used for monitoring or for taking corrective actions.

Our proposed method uses the simple observation that the {\em preimage $M^{-1}$ of a preception contract (PPC)}---which outputs the uncertainty in  state given an input observation---can be used at runtime. Specifically, if the set of states $M^{-1}(y)$ corresponding to an observation $y$ shows no risk of violating the target invariant $R$, then no corrective action is needed. On the other hand if $M^{-1}(x)$ could potentially compromise safety, then some corrective action may be necessary. 

Calculating the preimage perception contract (PPC) directly is a complex task, and hence, we use a Bayesian Neural Network (BNN) to construct the PPC based on data. 

We then introduce a concept of risk, which defines a monotonous function relative to the safety property across different states. Our proposed risk heuristic, guides control actions by prioritizing the states within the inferred uncertain set that pose the highest level of risk, as illustrated in Figure~\ref{fig:perception_correct}

By integrating the correction module at runtime, we were able to effectively recover 33 out of the 45 unsafe scenarios, resulting in a 73\% recovery rate. It is important to note that our inability to address all unsafe scenarios may be attributed to conformance violations during the empirical construction of the preimage perception contract (PPC) from data.

Additionally, our evaluations demonstrate that the intervention of the module is minimal and does not impose excessive or overly conservative control measures. In fact, our experiments reveal only a 2.8\% increase in the time required to complete tasks when our module is integrated.

In summary, our approach to addressing the runtime perception correction problem involves the incorporation of the preimage perception contract (PPC) and a risk heuristic into the existing closed-loop system. Initial findings have yielded promising results, indicating that this method is effective in enhancing the system's  safety.

\vspace{1em}

%% file: section/2.related_work.tex
\section{Related Work}

With the emergence of increasingly sophisticated sensors, such as cameras, LiDAR, and radar, coupled with the development of advanced perception algorithms, the issue of seamlessly integrating these sensors and their associated machine learning-based algorithms into the controller pipeline has become a prominent subject of research. In recent studies, innovative methods like imitation learning~\cite{DBLP:journals/corr/abs-2103-17118} and reinforcement learning utilizing RGB cameras~\cite{DBLP:journals/corr/abs-1801-05299} have been introduced to tackle the challenge of vision-based control for autonomous vehicles. However, it's important to note that these approaches are data-driven, and they do not effectively characterize or bound the potential errors in perception, which ultimately limits their capacity to guarantee safety.

Recent research efforts have been primarily directed towards ensuring safety through vision-based control. In the work by Dean et al. \cite{dean2020robust}, the authors synthesized a vision-based controller for autonomous vehicles and carried out theoretical analyses to establish a robust safety guarantee. However, their approach involved simplifying the vehicle model to a linear one.
In a subsequent study, as presented in \cite{dean2021guaranteeing}, the author proposed a Measurement-Robust Control Barrier Function (MR-CBF) that incorporates an optimization method for synthesizing a safe controller. 
Dawson et al.~\cite{dawson2022learning} focused on working with high-dimensional sensors like LiDAR. They proposed a method for learning a control Lyapunov function (CLF) and a control barrier function (CBF) within the observation space, without making assumptions about the perception module. 
Additionally, in the work by Chou et al. \cite{sun2023learningbased}, authors use the concept of perception contract to design a controller for a safe landing problem. \cite{chou2022safe}, the authors designed a neural network-based perception module capable of outputting a set of potential states. Subsequently, they applied contraction theory and robust motion planning algorithms to synthesize a robust and safe vision-based controller. This work is closely related to our research; however, our approach involves generating a set of potential states by learning the behavior of a black-box perception model, in contrast to their method, which has to constructs such set of potential states while designing the perception model from data.

\vspace{1em}

%% file: section/4.system_setup.tex
\section{Runtime Perception Correction}
\label{sec:system}

In this section, we introduce the different parts making up  the  perception-based control system and then define the runtime perception correction problem. 

\subsection{Perception-Based Control System}
\label{sec:sysmodel}

The closed-loop system, comprises of three components: the plant with dynamics $f$, the controller $g$, the learning-based perception module $h$. 

\subsubsection*{Plant dynamics}
The state of the physical part of the system is denoted by vector $x \in \X \subset \mathbb{R}^n $, where $\X$ is called the state space. We denote by $x[i]$  the $i^{th}$ component of $x$. For example, for an autonomous vehicle, the state vector $x$ may include its position, velocity, heading, distance to front vehicle, etc.

System-level safety requirements are given in terms of a set of {\em unsafe states\/}, $\Unsafe \subset \X$, that the overall system must stay away from. The set of safe states,  $\Safe = \X \setminus \Unsafe$, is the complement of the unsafe states.

\begin{example}
    Consider a vehicle (ego) following curvy lanes on a highway with objective of ensuring that the vehicle stays within its lane and does not deviate. The state vector is defined by valuations of several variables: $\{p_E, v_E, \theta, d_L\}$, where $p_E$ is ego vehicle's pose (position and heading), $v_E$ is ego vehicle's velocity, $\theta$ is the angle between ego's heading and the lane's heading, $d_L$ is vehicle's cross-track error with respect to the center of the lane.
    Since the vehicle enters an unsafe state when a lane departure occurs, then we can define the {\em unsafe states\/} as a set, $\Unsafe = \{(p_E, v_E, \theta, d_L) : |d_L| \geq \frac{L}{2} \}$, where $L$ is the lane width. 
    \label{example:autonomous-vehicle}
\end{example}

The evolution of the plant state is described by a {\em  dynamic function} 
$f: \X \times \U \mapsto \X $, where $\U$ is the {\em control input space\/}. 
In Example~\ref{example:autonomous-vehicle}, the control inputs for the  vehicle are  throttle $t \in [0, 1]$, brake $b \in [0, 1]$, and steering $s \in [-1, +1]$, here $-1$ stands for the  maximum left steering input and $+1$ stands for maximum right steering. Given a state $x \in \X$ and an input $u \in \U$, the  next state of the vehicle $x_{t+1} = f(x_t, u_t)$. 

\subsubsection*{Perception and control}
The method developed in this paper targets systems in which the control input $u$ is computed in two stages: first, a {\em perception module\/} $h$ interprets the  signals generated in state $x$ via sensors to produce an observation $y \in \Y$, and then, the  {\em controller \/} $g: \Y \rightarrow \U$ takes as input this observation  $y$  and computes the control input (for the plant). 
A diagram is shown in Figure~\ref{fig:original-system}. In Example~\ref{example:autonomous-vehicle}, $h$ could output cross-track-error $d_L$ as the observation $y$, and the controller could applies a hard brake whenever the $d_L$ is above some threshold.

The perception module $h$ generates observation $y$ from the actual plant state $x$.
We name a perfect observer $h^*: \X \mapsto \Y$. In Example~\ref{example:autonomous-vehicle}, if $x$ is known, the observation $y$ (cross-track error $d_L$) can be directly obtained from $x$ by dropping the extra state components and retaining $d_L$ as observation. However, in most autonomous systems, usually state information $x$ cannot be directly obtained. Therefore we rely on an observer $h$, which encapsulates the behavior of the sensors that generate the raw signals (e.g., images, LIDAR returns) as well as the algorithms (e.g., machine learning models, filters), to generate the observation $y$ from those signals. The signals 
also critically  depend on certain environmental factors (e.g., lighting, fog, rain, etc.). The space of all possible such environmental conditions is denoted by $\E$. Thus, the perception module is modeled as a function $h:\X \times \E \rightarrow \Y$.
In Example~\ref{example:autonomous-vehicle}, to achieve the lane following task, ego vehicle needs to first rely on the camera sensor to generate an RGB image. Then the analysis of this image is done using a ML algorithm (E.g. LaneNet \cite{10.1109/IVS.2018.8500547}), to produce lane information and subsequently cross-track-error $d_L$.

There are several reasons for  this two-stage architecture for the computation of $u$. First, from the control theory point of view, it is standard to think of the whole pipeline as the composition of a state estimator ($h$) and a controller ($g$). Loosely speaking, the certainty equivalence principle assures that the optimality of controller design can be preserved by this decomposition, under appropriate assumptions.
The access to privileged state information, like the observables, have also been noted to benefit the development of reinforcement learning-based controllers~\cite{chen2021general}. 

Identifying all possible environmental factors that influence $h$ can be a complex problem. This work is based on the premise that domain experts prescribe the dominant factors in $\E$ with respect to which runtime perception correction should be applied.

\subsubsection*{Closed-loop system}
The discrete time evolution of the  {\em closed-loop system\/} or simply the {\em system} $S$, in an environment $e \in \E$, as shown in Figure~\ref{fig:original-system}, is given by the following:
\begin{equation}
    x_{t+1} = f(x_t, g(h(x_t, e)). 
    \label{eq:sys}
\end{equation}

An {\em execution} in an environment $e$, is a sequence of states $\alpha(e) = x_0, x_1, \ldots,$ such that for each $t$, $x_{t+1}$ and $x_t$ satisfy~(\ref{eq:sys}). With respect to an unsafe set $\Unsafe$, the system $S$ is safe 
over an environment $E' \subseteq \E$ and a set of initial states $X_0 \subseteq \X$, if for each $e \in E'$ and $x_0 \in X_0$, none of the states in $\alpha(e)$ are in $\Unsafe$, i.e., the reachable states of $S$ are disjoint from $\Unsafe$.

\begin{definition}
    A {\em control invariant set} $R\subseteq \X$ for the system $S$ is  a set such that:
    \begin{enumerate}
        \item $R \subseteq \text{Safe}$
        \item $\forall x \in R, \; \exists \, u \in \U \; \text{s.t.} \; f(x, u) \in R$.
    \end{enumerate}
    \label{def:control-invariant-set}
\end{definition}

There has been substantial progress in computing the control invariant sets for systems. Notable techniques include  barrier certificates\cite{verification-barrier-certificates, PRAJNA2005526}, formal controller synthesis\cite{badings2022probabilities}, control barrier functions\cite{control-barrier-function}, and more recently neural barrier functions\cite{dawson2022safe}.

These techniques vary in terms of the  levels of knowledge needed about $f, g, h$, their computational complexity, and the level of formal guarantee that they provide. 
However, our system $S$ includes learning-enabled perception $h$, which depends on the environment in complex ways, and therefore, some of the existing techniques will not be directly applicable. Instead, our sufficient condition for proving safety of the overall system is based on using control invariant sets for an {\em idealized  controller-observer pair $g^*,h^*$}. In the two-staged observer-controller design paradigm, it is indeed common for the controller design to assume that the observer is at least asymptotically correct. The following  codifies this assumption about such an idealized observer-controller pair.

\begin{assumption}[Safety with perfect observer-controller] There exists a {\em perfect observer} $h^*:\X \rightarrow \Y$ and controller $g^*: \Y \rightarrow \U$ pair for a given safe invariant set $R \subseteq \Safe$. That is,  for any $x \in R$,
$f(x,g^*(h^*(x)) \in R.$
\label{assump:g_h*}
\end{assumption}

\subsection{Runtime Perception Correction  Problem}
\label{sec:pcp}
Due to environmental uncertainty, sensor noise, and inaccuracies in the Deep Neural Network, the observation $h(x, e)$ may not be accurate and deviate from $h^*(x)$, which could lead to unsafe conditions (e.g., incorrect lane detection causing lane invasion in foggy conditions). As a result, in certain cases, even though the system has a safe controller under perfect perception (Assumption~\ref{assump:g_h*}), there's no guarantee the system is safe with the neural-network based perception module $h$. 

\begin{definition}[Runtime perception correction problem]
\label{def:problem}
\begin{itemize}
    \item[]
\end{itemize}
Given:
\begin{itemize}
    \item a perfect observer-controller pair $g^*, h^*$ and a corresponding  control invariant set $R$ that proves safety of the  closed-loop system $S$  with respect to $\Safe$.
    \item an actual (learning-enabled) observer  module $h$ that depends on environment factors in $\E$.
\end{itemize}
 The objective is to  synthesize a controller $\hat{g}: Y \mapsto U$ such that $\forall x_0 \in \X_0, e \in \E_0$, the new closed loop system with $h$, $g$ is safe. 
\end{definition}

In addition, to ruling out trivial solutions (e.g., always brake and stop), we have a soft-requirement that the new system with $g$ and $h$ should be minimally invasive over $S$. Metrics for invasiveness are not straightforward to define, and we will consider some examples in Section~\ref{sec:case-study}.

\vspace{1em}

%% file: section/5.methodology.tex
\section{Methodology}
Our idea for solving the above problem involves two stages: the first stage infers the uncertainty in the state of the system at runtime from the observations, and the second stage takes control action based on the riskiest states in inferred uncertain set.
For reasons that will become clear below, the first stage is called {\em Preimage of a Perception Contract (PPC)\/} and the latter is called a {\em risk heuristic\/}. 
The difficulty of inferring the uncertainty in the actual state from  observations is addressed using  using the recently invented notion of  perception contracts

\subsection{Perception Contracts $M$}
\label{sec:pc}
Safety analysis of machine learning-enabled and perception-based control systems is hard since we do not have specifications for the ML modules. Specification are not only necessary for formal verification, but they form the basis for modern, large-scale software engineering by enabling unit tests, modular design, and assume-guarantee reasoning. 
In~\cite{EMSOFT,OOPSLA} the authors introduce the notion of {\em perception contracts} for addressing this problem. 
A perception contract $M$ for an actual perception module $h$, in the context of a closed loop system $S$, and its control invariant set $R$, captures two ideas: First, $M$ is an over-approximation of $h$, over at least some part of the relevant environment space $\E$. This is called the {\em conformance\/} of the contract. Second, $M$ preserves system-level correctness of $S$ with respect to $R$. That is, if $M$ is plugged-in to $S$, then the resulting closed-loop system, $S_M$, should preserve $R$. 

 \begin{definition}
\label{def:pc}
For a given perception module $h:\X \times \E \rightarrow \Y$, an invariant set $R\subseteq \X$, and an environment $\E' \subseteq \E$, a  {\em perception contract\/} is a map  $\pc: \X \rightarrow 2^\Y$ that satisfies the two conditions:
\begin{enumerate}
    \item Conformance: $\forall x \in R, e\in E'$, $h(x,e)\subseteq \pc(x)$.
    \item Correctness: $\forall x \in R, f(x,g(M(x)) \subseteq R$.

\end{enumerate}
\end{definition}
Note that here we use $2^\Y$ to denote a powerset of $\Y$.

In the previous works, the authors have shown that it is possible to construct such contracts from data for vision-based lane keeping systems and for automated landing systems. The constructed contracts can indeed be used to rigorously prove system-level safety (e.g., car does not leave the lane boundaries). In~\cite{EMSOFT}, for example,  $M(\cdot)$ is a piece-wise affine set-valued function constructed from data, and the correctness condition is verified using program analysis. 
Owing to the complexity of the actual perception pipeline $h$ and its complex dependence on the environment $E$, the conformance property is empirically validated based on input-output data.

\subsection{Preimage of Perception Contracts $M^{-1}$}
\label{sec:ppc}

Inspired by perception contracts, in this work we explore how such contracts can be used at runtime to possibly correct perception errors. The challenge we face is that the ground truth state $x$ is not available at runtime to use $M(x)$, even though, $M(\cdot)$ is computed offline. Our key idea is to use the {\em preimage} of perception contracts, that is, $M^{-1}:Y \rightarrow 2^X$. Conceptually, 
for a given observation $y \in \Y$, its preimage $M^{-1}(y)$ gives the set of all possible states that could generate $y$, in some environment $\E'$.

For a given perception contract $M:\X\rightarrow 2^\Y$ we define $M^{-1}:\Y\rightarrow 2^\X$  as $M^{-1}(y) := \{ x \ | \ y \in M(x) \}$. It follows that, if that for any realizable $y \in \Y,$ if observer function $h$ conforms to $M$ over $\E'$, then $h^{-1}(y) \subseteq M^{-1}(y)$.
\begin{proposition} 
    \label{prop:ppc-ph}
    Consider any realizable observation $y \in \Y$, such that there exists $x \in \X, e \in \E'$ with $h(x,e) = y$. If $h$ conforms to the perception contract $M$, then $h^{-1}(y) \subseteq M^{-1}(y)$.
\end{proposition}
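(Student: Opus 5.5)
The argument is a direct element-chase through the definitions; no earlier result beyond Definition~\ref{def:pc} is needed. First, fix what $h^{-1}(y)$ means. Since $h:\X\times\E\rightarrow\Y$ takes an environment argument while $M^{-1}(y)\subseteq\X$, the natural reading, and the one I would adopt, is the set of states that can produce $y$ in some admissible environment:
\[
h^{-1}(y) := \{ x \in \X \mid \exists\, e\in\E' \text{ such that } h(x,e)=y \}.
\]
The realizability hypothesis only guarantees that this set is nonempty, so the inclusion is not vacuous. It plays no further role in the argument.

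Next, take an arbitrary $x\in h^{-1}(y)$ and pick a witness $e\in\E'$ with $h(x,e)=y$. Apply the conformance clause of Definition~\ref{def:pc}, which gives $h(x,e)\in\pc(x)$ (the paper writes $\subseteq$, treating $h(x,e)$ as a singleton). Hence $y\in\pc(x)$. By the definition $M^{-1}(y)=\{x \mid y\in M(x)\}$, this means $x\in M^{-1}(y)$. Since $x$ was arbitrary, $h^{-1}(y)\subseteq M^{-1}(y)$.

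The one real obstacle is a domain mismatch. Conformance in Definition~\ref{def:pc} is only asserted for $x\in R$, whereas $h^{-1}(y)$ as written ranges over all of $\X$. I would handle this by reading ``$h$ conforms to $M$'' in the proposition as conformance over the states under consideration. Concretely, I would either restrict the preimage to $h^{-1}(y)\cap R$ (runtime states lie in $R$ while the invariant is maintained), or assume conformance holds on all of $\X$. I would state this restriction explicitly in the proof, and note that the same element-chase then goes through verbatim on the restricted domain. The environment quantifier must likewise be restricted to $\E'$.
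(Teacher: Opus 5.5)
Your proposal is correct and is the same element-chase the paper uses: define $h^{-1}(y)$ with a witness $e\in\mathcal{E}'$, apply conformance to get $y\in M(x)$, and conclude $x\in M^{-1}(y)$. The paper's proof applies conformance at every $x\in h^{-1}(y)$ without comment, even though Definition~\ref{def:pc} only asserts it on $R$. Your explicit restriction to $h^{-1}(y)\cap R$ (or, alternatively, assuming global conformance) therefore tightens the argument rather than departing from it. The restricted version is all Theorem~\ref{thm:safety} needs, since it only invokes the proposition at $x_t\in R$.
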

\begin{proof}
    Follows from the definitions. Consider any realizable $y \in \Y$ and let $h^{-1}(y) := \{ x \ | \ \exists e \in \E', h(x,e) = y \}.$ Consider any $x \in h^{-1}(y)$. Since, $h$ conforms to $M$ over $\E'$, $h(x,e) = y \in M(x)$, for some $e \in \E'$. By definition of $M^{-1}$, then $x \in M^{-1}(y)$.
\end{proof}

\begin{figure}
    \centering
    \includegraphics[width=0.7\linewidth]{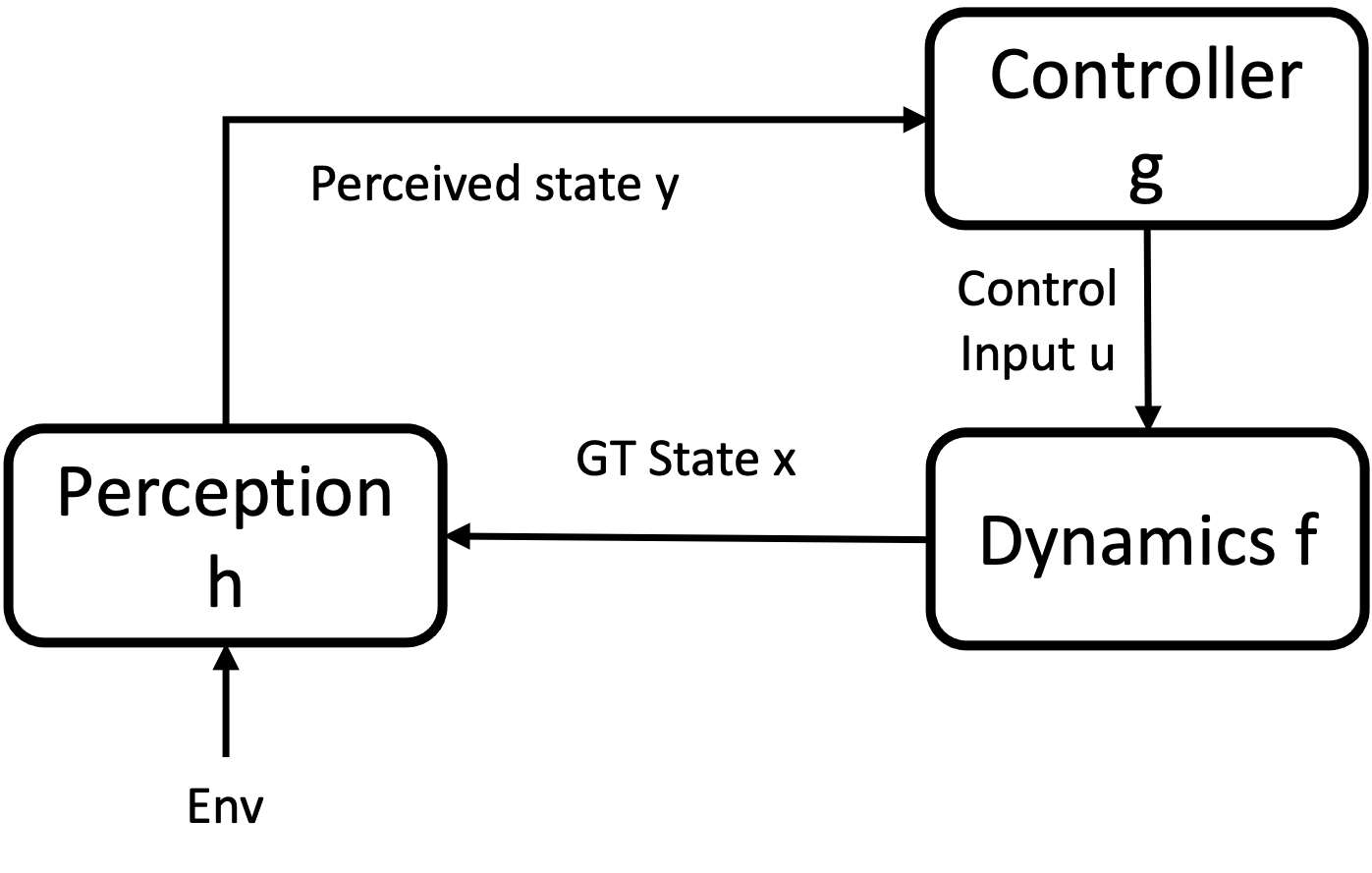}
    \caption{This is the block diagram of the closed loop system $S$, which is mathematically described in Equation~\ref{eq:sys}}
    \label{fig:original-system}
\end{figure}

\begin{figure}
    \centering
    \includegraphics[width=\linewidth]{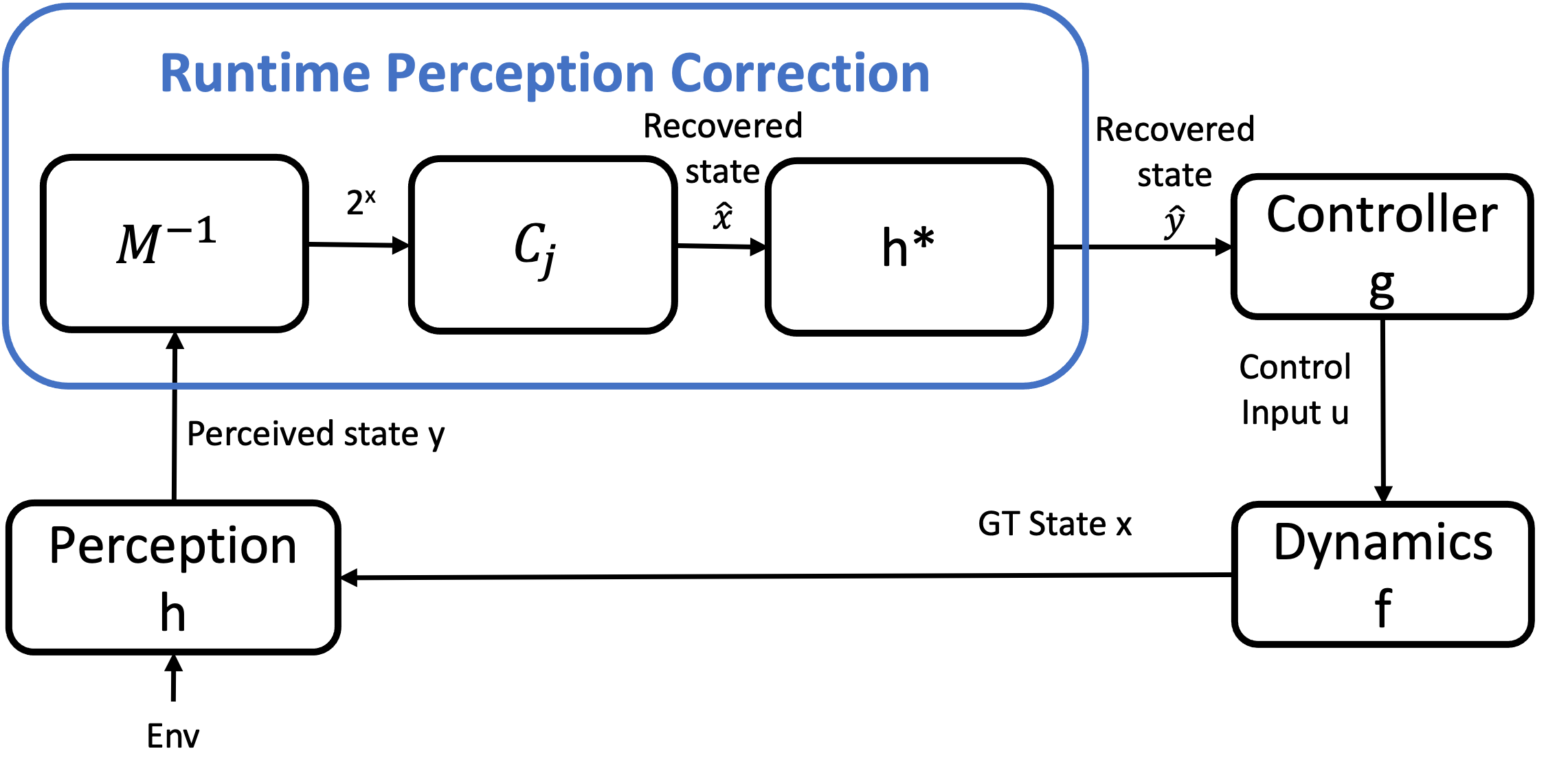}
    \caption{This is the block diagram of the new closed-loop system $S_{MJ}$ with our runtime perception correction module from $y$ to $\hat{y}$, which is mathematically described at Equation~\ref{sys:correct_sys}
    }

    \label{fig:perception_correct}
\end{figure}

\subsection{Constructing PPC from data}
\label{sec:bnn}
Ideally, PPC should achieve perfect conformance, i.e., Proposition~\ref{prop:ppc-ph} should always hold. However, this perfect conformance requirement of PPC might be too strong for real autonomous system involving perception. In this paper, instead of directly learning $M^{-1}$ from $M$, we propose a method to empirically learn a PPC from data using Bayesian Neural Network(BNN). We choose BNN as our model architecture due to its ability to model uncertainty and its ability to generalize from limited training data.

Let dataset $\D:=\{(y_i, (x_i, e_i))\}$, where $y_i$ is the input data, $(x_i, e_i)$ is the output label. Traditional neural networks output a point estimate for a given input, that is, they produce one deterministic value (or vector of values) given an input vector, i.e. $(x, e) = f_{\theta^*}(y)$, where f is the neural network parameterized by a deterministic optimal $\theta^*$, and $\theta^*$ in practise is obtained by conducting gradient descend to minimize the loss function.
\begin{equation}
    \theta^* = \argmin_{\theta} \sum_{(x_i, e_i, y_i) \in \D} \mathcal{L}(f_\theta(y_i), (x_i, e_i))) 
\end{equation}

A Bayesian Neural Network(BNN), in contrast, produces a distribution over possible outputs for an input vector, since the parameters of the BNN are also random variables, i.e. $\widetilde{x}, \widetilde{e} = f_{\widetilde{\theta}}(y)$ where $\widetilde{x}, \widetilde{e}, \widetilde{\theta}$ are both random variables. This probabilistic approach allows for the modeling of uncertainty, which can be essential in many applications, especially when decisions based on the output have significant implications, like in autonomous driving\cite{ijcai2017p661}. 

In this paper, we use a specific kind of BNN, where the prior distribution of the BNN weights follow a Gaussian distribution, i.e., $\theta \sim \N(\mu, \Sigma)$, the optimal weights are parameterized by $\mu^*, \Sigma^*$. Similarly to traditional neural networks, in practice, the optimal weight is found by doing gradient descend on the loss function.
\begin{equation}
\resizebox{0.48\textwidth}{!}{$
    \mu^*, \Sigma^* = \argmin_{\mu, \Sigma} \sum_{(x_i, e_i, y_i) \in \D} \mathcal{L}(f_\theta(x_i, e_i), y_i) - KL(p(\theta), p(\theta_0))
    $}
\end{equation}
where $\theta_0$ is a normal distribution, i.e., $\theta_0 \sim \N(0, 1)$. The KL divergence is added to to the loss function to prevent the weight from drifting away from normal distribution. We use a Gaussian distribution as a prior for weight of the BNN, because the Gaussian prior acts as a natural form of regularization to avoid overfitting, and have been used as a common technique in training the BNN. \cite{NIPS2011_7eb3c8be, blundell2015weight, gal2016bayesian}.

During the inference time of the BNN, we can get a set of $\hat{x}, \hat{e}$ given y, simply by sampling the querying the trained BNN N times. We include both $x$ and $e$ during training process to increase the training accuracy. But for the runtime perception correction problem, we only need to infer for $x$, so we will drop the $e$ dimension when making up $M^{-1}$ as follows:

\begin{equation}
    M^{-1}(y) = \{x_i | (x_i, e_i) \sim f_{\theta^*}(y)\}_{i=1}^N
\end{equation}

\subsection{Risk Heuristic}
\label{sec:riskh}

We introduce a notion of risk which defines a monotonic function over states with respect to the safety property. A similar notion ``Monotonic Safety'' was used in~\cite{9797566} to remove the uncertainty of non-deterministic models for Statistic Model Checking. 

\begin{definition}
    For a given control invariant set $R \subseteq \X$, a {\em risk function\/} $J: \X \mapsto \R_{\geq 0}$ assigns a real value to each state such that for any two safe states $x_1, x_2 \in \Safe$, if $J(x_1) \geq J(x_2)$ and there exists an $R$-preserving  control $\bar{u} \in \U$ for $x_1$, then $\bar{u}$ also preserves $R$ for $x_2$, that is, $f(x_2, \bar{u})) \in R.$ 

    \label{def:risk-function}
\end{definition}
Informally, if a more risky state preserves the invariant $R$, then a less risky state also preserves the same invariant by taking the same control action.  In Examaple~\ref{example:autonomous-vehicle}, consider the ego vehicle following lanes on curvy highway and two states $x_1$=(${p_E}_1, v_E = 5, \theta=0, d_L =\frac{L}{2}$), $x_2$=(${p_E}_2, v_E = 5, \theta=0, d_L =0$), and a risk function $J(x) = |d_L|$, which measures the risk as distance to lane center. It's obvious that $J(x_1) \geq J(x_2)$ since $x_1$ is on the lane boundary while $x_2$ is in the lane center, and if the next state of $x_1$ with full throttle $u = (t=1, b=0, s=0)$ under dynamics is in $R$, then $x_2$ with the same full throttle control will also stay in $R$.

Using the risk heuristic $J$, we can define a corresponding function $C_J$ that chooses the riskiest  state from the intersection of the PPC $M^{-1}$ and the invariant set $R$. Here we take the intersection because, according to definition~\ref{def:control-invariant-set}, states outside $R$ cannot guarantee the existence of control values that will preserve the invariant set $R$. 

   \begin{equation}
        C_J(M^{-1}(y)) = \argmax_{x \in  M^{-1}(y) \cap R} J(x).    
        \label{eq:cj}
    \end{equation}

Finally, we  apply the perfect observer $h^*$ and subsequently the control function $g$ to the state returned by $C_j(M^{-1}(y))$ to compute the control input to the plant. The evolution of the resulting corrected closed-loop system $\corrSys$ is given by (shown in Figure~\ref{fig:perception_correct}):
\begin{equation}
    x_{t+1} = f(x_t, g(h^*(C_J(M^{-1}(h(x_t, e)))))).
    \label{sys:correct_sys}
\end{equation} 

We claim that $\corrSys$ indeed preserved the safety invariant $R$ provides $h$ conforms to $M$, 
and therefore, solves the runtime perception correction problem. 

\begin{theorem}
    Given a preimage of a perception contract $M^{-1}$ for the actual perception function $h$ such that $h$ conforms to $M$ and a risk heuristic  $J$ for $R$, the corrected system $\corrSys$ described by Equation~(\ref{sys:correct_sys}) preserves the  invariant $R$ for $\E' \in \E$.
    \label{thm:safety}
\end{theorem}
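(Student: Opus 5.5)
The argument is a one-step invariance induction. We fix $e\in\E'$ and a current state $x_t\in R$, and show that $x_{t+1}$ given by Equation~(\ref{sys:correct_sys}) lies in $R$. Induction on $t$ then gives that every execution of $\corrSys$ starting in $R$ stays in $R\subseteq\Safe$. Throughout we read the controller $g$ in Equation~(\ref{sys:correct_sys}) as the idealized controller $g^*$ of Assumption~\ref{assump:g_h*}, since that is the pair whose guarantee we use.

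The first step is to show that the riskiest state is well defined and belongs to $R$. Let $y=h(x_t,e)$; this is a realizable observation. By Proposition~\ref{prop:ppc-ph}, conformance of $h$ to $M$ gives $x_t\in h^{-1}(y)\subseteq M^{-1}(y)$. Since also $x_t\in R$, the set $M^{-1}(y)\cap R$ is nonempty. We let $\hat x=C_J(M^{-1}(y))$, which lies in $M^{-1}(y)\cap R$ and satisfies $J(\hat x)\ge J(x_t)$ by the definition~(\ref{eq:cj}) of $C_J$. For the argmax to be attained we assume $M^{-1}(y)\cap R$ is finite, which holds for the sampled BNN construction, or compact with $J$ upper semicontinuous. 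Otherwise we replace the argmax by any maximizer, or by a state whose risk is at least $J(x_t)$, which is all the argument needs.

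The second step shows that the control chosen for $\hat x$ keeps $\hat x$ in $R$. Set $\bar u=g^*(h^*(\hat x))$. Because $\hat x\in R$, Assumption~\ref{assump:g_h*} gives $f(\hat x,\bar u)\in R$, so $\bar u$ is an $R$-preserving control for $\hat x$.

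The third step transfers this control to the true state using monotonicity of the risk function. Both $\hat x$ and $x_t$ are in $R\subseteq\Safe$, $J(\hat x)\ge J(x_t)$, and $\bar u$ preserves $R$ from $\hat x$. Definition~\ref{def:risk-function} therefore yields $f(x_t,\bar u)\in R$, and $f(x_t,\bar u)$ is exactly $x_{t+1}$. This closes the induction.

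The main obstacle is the first step: we need the true state $x_t$ to be among the candidates over which $C_J$ maximizes. This is exactly where conformance, via Proposition~\ref{prop:ppc-ph}, is indispensable, and it is also where the well-posedness caveats above come in: existence of the maximizer, and the identification of the generic $g$ with $g^*$. The empirically learned $M^{-1}$ enters only through this containment, so the theorem is conditional on conformance holding for $e\in\E'$.
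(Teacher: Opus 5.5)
Your proof is correct and follows the paper's argument step for step: conformance via Proposition~\ref{prop:ppc-ph} puts $x_t$ in $M^{-1}(y)\cap R$, so the riskiest candidate $\hat x$ lies in $R$ with $J(\hat x)\ge J(x_t)$; Assumption~\ref{assump:g_h*} then makes $g^*(h^*(\hat x))$ an $R$-preserving control for $\hat x$, and Definition~\ref{def:risk-function} transfers it to $x_t$. Your explicit identification of $g$ with $g^*$ and your remark on the existence of the maximizer are clarifications the paper leaves implicit, not a different route.
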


\begin{proof}
    Consider any $x_t \in R$ to be the input of $h$. From Proposition~\ref{prop:ppc-ph}, we know that for any $x_t \in \X, e \in \E'$ with $h(x_t,e) = y$, if $h$ conforms to the perception contract $M$, we have
    \begin{equation*}
        x_t \in M^{-1}(h(x_t,e)) 
    \end{equation*}
    Since $x_t \in R$ then $M^{-1}(h(x_t, e)) \cap R \neq \emptyset$. From Equation~\ref{eq:cj}, we have the output of the heuristic function, calling it $\hat{x}$, satisfying
    \begin{equation*}
        \hat{x} = C_J(M^{-1}(h(x_t,e))) \in (M^{-1}(h(x_t, e)) \cap R) \subseteq R. 
    \end{equation*}
     From Assumption~\ref{assump:g_h*}, since $\hat{x} \in R$, we know that there exists $g$ such that 
    \begin{equation*}
        f(\hat{x}, g(h^*(\hat{x})) \in R.
    \end{equation*}
    From Equation~\ref{eq:cj},  since $\hat{x}$ is chosen to maximize the Risk Function $J$ within $(M^{-1}(y) \cap R)$, then  $J(\hat{x}) \geq J(x_t)$. From Definition~\ref{def:risk-function}, we hence have that 
    \begin{equation*}
        x_{t+1} = f(x_t, g(h^*(\hat{x})) \in R \subseteq Safe
    \end{equation*}
    thus concluding the proof. 
\end{proof}

\vspace{1em}

%% file: section/6.experiments.tex
\section{Autonomous Cruising Control}
\label{sec:case-study}
In this section, we introduce the details of an autonomous cruise control system (ACC) which we will analyze in Section~\ref{sec:exp}.

\subsection{Autonomous Cruise Control Problem}

The driver assistance feature we study is a combination of lane keeping and adaptive cruise control. Similar autonomy features commercially go by other names such as AutoPilot, Travel Assist, AutoCruise, etc. In typical operation, the ego vehicle moves at a set speed  behind a lead vehicle. Both vehicles follow (possibly curving) lanes. If the lead vehicle slows down then the ego vehicle has to maintain safe separation. The ego vehicle's autonomy pipeline uses vision-based perception. We explore three lane configurations: Track 1, a lane with left curve; Track 2, a lane with right curve; Track 3, a straight lane.

The state of the whole system includes ($p_E, p_F, v_E$, $v_F, \theta, d_L, d_F$), where $p_E$, $p_F$ are the pose (position and heading) of the ego vehicle and the leading vehicle respectively, $v_E, v_F$ are the velocity of the ego vehicle and the leading vehicle respectively, $\theta$ is the angle between ego's heading and the lane's heading, $d_L$ is ego vehicle's distance to lane center(also known as cross track error), $d_F$ is ego's distance to the leading vehicle. The unsafe set is defined as states where the ego vehicle is out of the lane boundaries or there is a collision, i.e. $\Unsafe = \{(p_E, p_F, v_E, v_F, \theta, d_L, d_F) | d_L \geq \frac{L}{2} \lor d_F \leq W\}$, where $L$ is the lane width and $W$ is the vehicle length. 
The observations given by sensors and ML-based perception will consist of $(\theta, d_L, d_F)$, which have the same meaning as the state variables with the same names above.

We use CARLA~\cite{DBLP:journals/corr/abs-1711-03938} to create and run the scenarios. CARLA is an open-source platform designed specifically to support the development and validation of autonomous driving systems. For the ego vehicle and the leading vehicle, we use the Carla built-in Tesla Model 3 vehicle's dynamics.
We use a reachability tool~\cite{DBLP:journals/corr/FanQM017} to approximate the invariant $R$.

A key advantage of CARLA lies in its ability to create realistic and diverse real-world scenarios, as well as different weather conditions (ranging from clear skies and rain to fog and snow) that could affect the sensor readings. During runtime testing, we consider six  weather conditions ranging from demanding environments such as late night, heavy fog, and rain to clear skies (optimal driving conditions): Weather 1 to 5 represent decreasing levels of fog and rain, with Weather 6 characterized by clear skies. We intentionally chose 5 extreme weather conditions to assess their impact on perception $h$.

\paragraph*{Perception $h$}
We use two learning-based perception module to detect the front vehicle and the lane: YOLO v8n~\cite{reis2023realtime} and LaneNet~\cite{10.1109/IVS.2018.8500547}. YOLO v8n, an evolution of the 'You Only Look Once' series, is utilized for its swift and accurate object detection capabilities, particularly for identifying front leading vehicles. It's known for offering high detection accuracy while ensuring minimal latency. On the other hand, LaneNet is employed specifically for its prowess in lane detection. This architecture combines semantic segmentation with instance segmentation to precisely distinguish between individual lane lines, even in challenging conditions. Together, YOLO v8n and LaneNet form a routine perception backbone, ensuring our autonomous vehicle is consistently aware of its surroundings, which is also a commonly adopted by the research community for autonomous vehicle systems~\cite{10.1109/ICRA48506.2021.9561747}. The two modules will output location of leading vehicle and lanes in the camera frame. Together with a depth camera and the known intrinsic and extrinsic matrix of the camera sensor, we can get the observation $y := (\theta, d_L, d_F)$. We fine-tuned the two models on a customized Carla image dataset.

\paragraph*{Controller $g$}
We experiment with 3 lateral  and 1 longitudinal controller. The controllers are implementations of racing controllers submitted by leading participants of the GRAIC competition~\cite{osti_10296575}. All controllers  take  the same observations, and produce the  control values, namely throttle, brake and steering.

The lateral controllers we have from GRAIC are a modified verison of Pure Pursuit\cite{DBLP:journals/corr/abs-2111-08873}, a modified version of Stanley\cite{stanley} and a simplified kinematic steering control. The Pure Pursuit algorithm computes the required steering angle based on the vehicle's lookahead distance; the Stanley controller computes the steering based on the vehicle's orientation relative to the path and cross-track-error; the simplified kinematic steering control uses simple geometric solution to the path-following problem.

For longitudinal control, we utilize a PID controller, designed to maintain a constant vehicle speed with high precision. In scenarios demanding instant decisions, such as potential collisions, we incorporate the Responsibility Sensitive Safety (RSS) formula. This ensures the vehicle brakes promptly and safely, taking into account both the vehicle's dynamics and the surrounding environment.

The vehicle controller $g$ is a combination of both lateral control and longitudinal control. We name the combination of Pure Pursuit and PID Controller $C_1$, the combination of Stanley and PID $C_2$, the combination of kinematic control and PID $C_3$.

The combination of three road geometries, six set of weather, and three controllers define 54 different ACC {\em scenarios}. Each scenario can be modeled as a closed loop system $S$ satisfying Equation~(\ref{eq:sys}) if we are using perception $h$. Moreover, each scenario can also be modeled as a new closed loop system $S_{MJ}$ satisfying Equation~(\ref{sys:correct_sys}) if we use both perception $h$ and our runtime perception correction ($M^{-1}, C_J, h^*$).

\section{Experimental Evaluation}
\label{sec:exp}
We apply runtime perception correction to the ACC scenarios described above and have following observations.

\subsection{Construction of Preimage Perception Contract $M^{-1}$}

To obtain training data for generating preimage perception contract, we run a safe controller across 20 sets of different weather conditions and 3 different lane configurations to collect pairs of ground truth states $x$ and observation $y=h(x, e)$. We run the controller for 3 hours and collect 5k set of such pairs to make up the dataset $\D = \{(y_i, (x_i, e_i))\}$, where $y_i$ is the input feature, and $(x_i, e_i)$ is the output label.

We divide the dataset into 80\% training dataset, and the rest 20\% for validation, as this is a common practice in neural network training. We trained the data for 3000 epoches with batch size 32 on a BNN, discussed at Section~\ref{sec:bnn}. The BNN has 3 hidden layers with 32, 128, 16 neurons respectively. We assume the prior distribution of each layer's weights follow a Gaussian distribution $\N(0, 0.3)$, we use learning rate 0.01. The layers, neurons and prior distributions are hyper-parameters subject to tuning. We find such hyper-parameters after empirical tryouts.

\subsection{Choice of Risk Function}

The high-level idea behind the specific risk function $ J: X \mapsto \mathbb{R}_{\geq 0}$ we chose in this case study is that the closer the vehicle is to unsafe state, the more risk the state will have. This function is monotonically increasing with respect to the inherent risk of the state.

We define the risk as the $L_\infty$ norm of the weighted inverse of the distance to the unsafe region, taken over each dimension of the state space.

\begin{equation}
    J(x) =  
    \begin{Vmatrix}
        \begin{bmatrix}
        \frac{w_1}{x[1] - x^u[1]} \\
        \frac{w_2}{x[2] - x^u[2]} \\
        \vdots \\
        \frac{w_n}{x[n] - x^u[n]} \\
        \end{bmatrix}
    \end{Vmatrix}_\infty 
    \label{eq:risk-function}
\end{equation}
where $x^u = (0, 0, 0, 0, \frac{\pi}{2}, \frac{L}{2}, W)$ is the unsafe boundary for the state variables $(p_E, p_F, v_E, v_F, \theta, d_L, d_F)$ and the corresponding weight $w = (0, 0, 0, 0, 0.2, 0.4, 0.4)$. $w$ is carefully chosen after empirically tryouts to satisfy Definition~\ref{def:risk-function}. The idea behind such choice of weight is because we want to assign high risks to vehicle states where it's too close to the leading vehicle (i.e. $d_F = W$ is where collision will happen) or cross track error is too high (i.e. $d_L = \frac{L}{2}$ is where out-of-lane will happen) or heading error is too large(i.e. $\theta = \frac{pi}{2}$ is where ego is almost perpendicular to the lanes).

\subsection{Runtime Perception Correction corrects 73\% unsafe scenarios} 
We first ran ego with perfect observer $h^*$ (i.e., ground truth observation provided by Carla simulator), and all 54 scenarios result in safe execution, which indicates that Assumption~\ref{assump:g_h*} holds for the three controllers we have. Next, we ran ego with perception module $h$ on the same 54 scenarios. We discovered that, ego safely finished 9 scenarios while the rest 45 scenarios resulted in an unsafe finish (e.g., either a collision with the front vehicle or steering out of lanes). Last, we ran the ego with $h$ complemented by our runtime perception correction module ($M^{-1}, C_J, h^*$), and we see that our method maintains safety in 9 scenarios where using $h$ alone is safe; furthermore, it can recover 33 out of the 45 (73 \% recover rate) unsafe scenarios in which $h$ failed. We looked into the log of 12 scenarios where our approach cannot recover, and we noticed that it's due to that $h^{-1}$ does not fully conform to the constructed PPC from data(i.e. $x = h^{-1}(y) \notin M^{-1}(h(x,e))$ for some $x \in \X, e \in \E'$ in these 12 scenarios) , which violated the conformance assumption of Theorem~\ref{thm:safety} and thus safety can not be guaranteed. Detailed results are provided in Table~\ref{table:correction}.

\begin{table}
    \centering
        \begin{tabular}{|c|c|c|c|c|c|c|c|}
\hline
& & \multicolumn{2}{c|}{Track 1} & \multicolumn{2}{c|}{Track 2} & \multicolumn{2}{c|}{Track 3} \\
\hline
Scenarios & & h & $M^{-1}$ & h & $M^{-1}$ & h & $M^{-1}$ \\
\hline
\multirow{3}{*}{ Weather 1 } & C1 & X & X & X & X & X & X \\
& C2 & X & X & X & \checkmark & X & \checkmark \\
& C3 & X & \checkmark & X & \checkmark & X & \checkmark \\
\hline
\multirow{3}{*}{ Weather 2 } & C1 & X & X & X & \checkmark & X & \checkmark \\
& C2 & X & \checkmark & X & \checkmark & X & \checkmark \\
& C3 & X & \checkmark & X & \checkmark & X & \checkmark \\
\hline
\multirow{3}{*}{ Weather 3 } & C1 & X & X & X & X & X & X \\
& C2 & X & X & X & \checkmark & X & \checkmark \\
& C3 & X & \checkmark & X & \checkmark & X & \checkmark \\
\hline
\multirow{3}{*}{ Weather 4 } & C1 & X & X & X & X & X & \checkmark \\
& C2 & X & X & X & \checkmark & X & \checkmark \\
& C3 & X & \checkmark & X & \checkmark & X & \checkmark \\
\hline
\multirow{3}{*}{ Weather 5 } & C1 & X & \checkmark & X & \checkmark & X & \checkmark \\
& C2 & X & \checkmark & X & \checkmark & X & \checkmark \\
& C3 & X & \checkmark & X & \checkmark & X & \checkmark \\
\hline
\multirow{3}{*}{ Weather 6 } & C1 & \checkmark & \checkmark & \checkmark & \checkmark & \checkmark & \checkmark \\
& C2 & \checkmark & \checkmark & \checkmark & \checkmark & \checkmark & \checkmark \\
& C3 & \checkmark & \checkmark & \checkmark & \checkmark & \checkmark & \checkmark \\
\hline
\end{tabular}

    \caption{
    Safety evaluation under a variety of road conditions, weather and controllers. 
     "X" indicates unsafe scenarios while "\checkmark" signifies ego can finish the scenario safely}
    \label{table:correction}
\end{table}

\begin{figure*}[tbp]
    \centering

    \begin{subfigure}[b]{0.45\textwidth}
        \includegraphics[width=\textwidth]{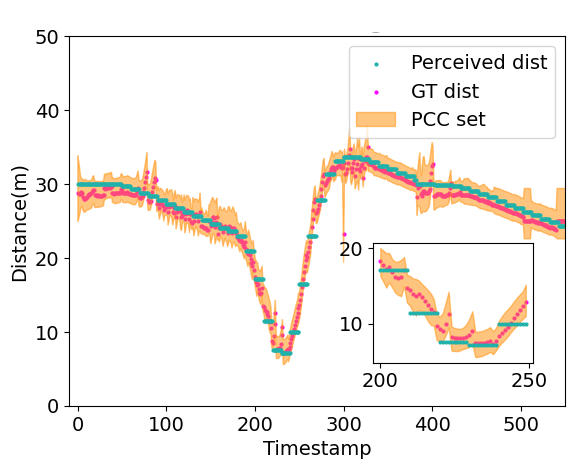}
        \caption{}
        \label{fig:ppc-testing-result-1}
    \end{subfigure}
    \hfill
    \begin{subfigure}[b]{0.45\textwidth}
        \includegraphics[width=\textwidth]{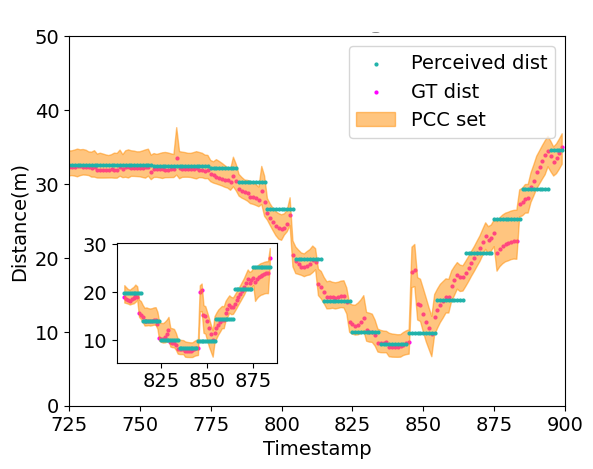}
        \caption{}
        \label{fig:ppc-testing-result-2}
    \end{subfigure}
    \label{fig:ppc-testing-result}

    \caption{The two graphs show PPC's prediction on one of the state dimension(distance to front vehicle $d_F$). Red dot indicates ground truth distance given by $h^*$, while green dots indicate observed distance given by $h$. Shaded area represented the constructed PPC $M^{-1}$}.
    
\end{figure*}

\subsection{Correction Intervention are minimally evasive}

\begin{figure}
    \centering
    \includegraphics[width=\linewidth]{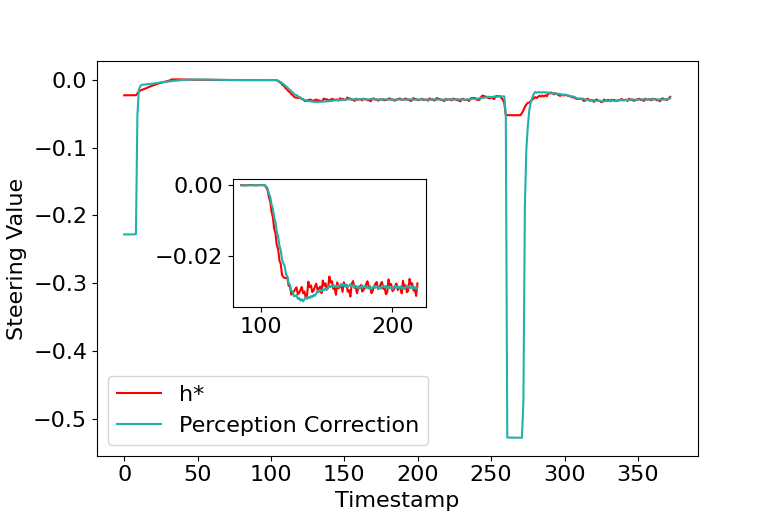}
    \caption{Vehicle's steering profile on one particular scenario.}
    \label{fig:correction-comparison}
\end{figure}

While the runtime perception correction module helps to maintain safety, it can also be integrated without inducing overly unnecessary behaviors, such as frequent stops or excessive adjustments. We look at the vehicle’s steering profile of one particular example(Track 1, Weather 6, C3) in Figure \ref{fig:correction-comparison}. We noticed that most of the time, our runtime perception correction results in a similar steering value as ones produced by the perfect observer $h^*$. Most of the time, runtime perception correction module does not result in unnecessary movement except for timestamp 270 to 280, where the curvature of the map slightly increase and our PPC captures a large uncertainty in perception errors, thus results in a large control value. 

Given the large number of experiments, analyzing each scenario by directly comparing control values can be both time-consuming and potentially misleading due to temporal shifts in these values. As such, we employ a more informative quantitative metric -- time to finish the scenario, which helps in gauging whether the vehicle engages in unnecessary behaviors. Due to the fact that each track has different length and potentially different time to finish, we compared the time to finish using perception $h$ and our runtime perception correction module ($M^{-1}, C_J, h*$) with time to finish with perfect observer $h^*$. Then in Table \ref{tab:time-to-finish}, we show the percentage of increase in time to finish using our runtime perception correction module. As presented, the percentage increases in completion time are, for the most part, negligible. The average percentage increase in time to finish the 33 scenarios is 2.8\%. This shows that our approach is minimally invasive to the original closed loop system.

\begin{table}[htbp]
    \centering

    \begin{tabular}{|c|c|c|c|c|}
    \hline
    & & Track 1 & Track 2 & Track 3 \\
    \hline
    \multirow{3}{*}{ Weather 1 } & C1 & X & X & X \\
    & C2 & X & +1.5\% & +4.1\% \\
    & C3 & +3.2\% & +2.6\% & +4.6\% \\
    \hline
    \multirow{3}{*}{ Weather 2 } & C1 & X & +1.0\% & +3.0\% \\
    & C2 & +1.3\% & +1.5\% & +3.6\% \\
    & C3 & +3.2\% & +2.6\% & +4.1\% \\
    \hline
    \multirow{3}{*}{ Weather 3 } & C1 & X & X & X \\
    & C2 & X & +1.5\% & +4.6\% \\
    & C3 & +3.2\% & +2.6\% & +5.2\% \\
    \hline
    \multirow{3}{*}{ Weather 4 } & C1 & X & X & +2.5\% \\
    & C2 & X & +1.5\% & +3.6\% \\
    & C3 & +3.2\% & +2.6\% & +4.1\% \\
    \hline
    \multirow{3}{*}{ Weather 5 } & C1 & +2.6\% & +1.0\% & +2.5\% \\
    & C2 & +1.3\% & +1.5\% & +3.6\% \\
    & C3 & +3.2\% & +2.6\% & +5.2\% \\
    \hline
    \multirow{3}{*}{ Weather 6 } & C1 & +2.6\% & +1.0\% & +2.5\% \\
    & C2 & +1.3\% & +1.5\% & +4.6\% \\
    & C3 & +3.2\% & +2.6\% & +4.1\% \\
    \hline
    \end{tabular}

    \caption{Percentage of increase in time to finish by comparing system use runtime perception correction ($M^{-1}, C_J, h*$) with system that uses perfect observer $h^*$. "X" indicates our runtime perception correction module cannot finish that scenario safely}
    
    \label{tab:time-to-finish}
\end{table}

\subsection{PPC can be empirically constructed by BNN}
We evaluated whether the constructed PCC is effective by applying the trained BNN model on scenarios not in its training data, and we see that for 91.2\% of the testing data, $h^{-1}$ conforms to the constructed $M^{-1}$ (i.e. $x = h^{-1}(y) \in M^{-1}(h(x, e))$ for $x, e$ in the testing dataset). We show an example of how PCC visualizes in Figure~\ref{fig:ppc-testing-result-1},~\ref{fig:ppc-testing-result-2}. We see that, for most of the time, the constructed PCC through BNN always include the ground truth, which satisfies the conformance property of $M^{-1}$. For example, in Figure~\ref{fig:ppc-testing-result-1}, between timestamp 200 and 250, although observed distance, generated by perception $h$, slightly differ from ground truth distance, the PCC we constructed(yellow shaded part) through BNN always contain the ground truth.

\vspace{1em}

%% file: section/7.discussions.tex
\section{Conclusions}
We presented our method, which leverages the preimage perception contract and a risk heuristic, to correct learning-based perception errors for safety during runtime, assuming perfect conformance. Empirically, our approach demonstrated the capability to rectify 73\% of unsafe Adaptive Cruise Control (ACC) scenarios stemming from perception errors, while minimizing unnecessary behavior.
\vspace{1em}

\subsection{Limitation}
Ideally, by theorem~\ref{thm:safety}, we should be able to recover any  unsafe scenarios caused by noisy perception module. However, the empirical result only showed 73\% success. This is due to the fact that constructing the Preimage Perception Contract $M^{-1}$ through BNN does not guarantee 100\% conformance. In other word, Proposition \ref{prop:ppc-ph} might not always hold; however, in the same time, constructing $M^{-1}$ using any data-driven method(E.g. quantile regression) will face the same issue. 

\subsection{Future Directions}

In this paper, we extensively test multiple scenarios (different curvature, weather, and road conditions) to find particular five adversarial scenarios where perception module fails to ensure safety. However, manually searching through the whole state space to find an adversarial example is time-consuming and inefficient. Moreover, the BNN-based training method does not always guarantee conformance; therefore, alternate architecture or model to construct teh PPC could be explored. Our next step is to use the preimage perception contract to infer potential environment variables that can might lead to unsafe scenarios. This could help indicate the weakness of the controller.